\let\appendices\relax
\newcommand{\prl}[1]{\left(#1\right)}
\newcommand{\crl}[1]{\left\{#1\right\}}
\newcommand{\scaleMathLine}[2][1]{\resizebox{#1\linewidth}{!}{$\displaystyle{#2}$}}
\def\negquad{\mkern-18mu}
\newtheorem{proposition}{Proposition}
\theoremstyle{definition}
\newtheorem{definition}{Definition}
\newtheorem*{problem}{Problem}
\theoremstyle{remark}
\newcommand{\calE}{{\cal E}}
\newcommand{\calF}{{\cal F}}
\newcommand{\calI}{{\cal I}}
\newcommand{\calK}{{\cal K}}
\newcommand{\calN}{{\cal N}}
\newcommand{\calR}{{\cal R}}
\newcommand{\calU}{{\cal U}}
\newcommand{\calZ}{{\cal Z}}
\newcommand{\bfe}{\mathbf{e}}
\newcommand{\bfh}{\mathbf{h}}
\newcommand{\bfl}{\mathbf{l}}
\newcommand{\bfm}{\mathbf{m}}
\newcommand{\bfp}{\mathbf{p}}
\newcommand{\bfu}{\mathbf{u}}
\newcommand{\bfv}{\mathbf{v}}
\newcommand{\bfz}{\mathbf{z}}
\newcommand{\bfeta}{\boldsymbol{\eta}}
\newcommand{\bfphi}{\boldsymbol{\phi}}
\newcommand{\bfpsi}{\boldsymbol{\psi}}
\newcommand{\bfomega}{\boldsymbol{\omega}}
\newcommand{\bfE}{\mathbf{E}}
\newcommand{\bfR}{\mathbf{R}}
\newcommand{\bfX}{\mathbf{X}}
\newcommand{\bbR}{\mathbb{R}}
\title{\LARGE \bf Active Bayesian Multi-class Mapping from Range and Semantic Segmentation Observations%
\thanks{We gratefully acknowledge support from ARL DCIST CRA W911NF-17-2-0181 and ONR N00014-18-1-2828. The Unity simulation used for evaluation is developed by ARL for the DCIST project: \url{www.dcist.org}.}%
}
\author{Arash Asgharivaskasi \and Nikolay Atanasov
\thanks{The authors are with the Department of Electrical and Computer Engineering, University of California San Diego, CA 92093, USA {\tt\small \{aasghari,natanasov\}@eng.ucsd.edu}.}
}
\begin{document}

\maketitle

\begin{abstract}
Many robot applications call for autonomous exploration and mapping of unknown and unstructured environments. Information-based exploration techniques, such as Cauchy-Schwarz quadratic mutual information (CSQMI) and fast Shannon mutual information (FSMI), have successfully achieved active binary occupancy mapping with range measurements. However, as we envision robots performing complex tasks specified with semantically meaningful objects, it is necessary to capture semantic categories in the measurements, map representation, and exploration objective. This work develops a Bayesian multi-class mapping algorithm utilizing range-category measurements. We derive a closed-form  efficiently computable lower bound for the Shannon mutual information between the multi-class map and the measurements. The bound allows rapid evaluation of many potential robot trajectories for autonomous exploration and mapping. We compare our method against frontier-based and FSMI exploration and apply it in a 3-D photo-realistic simulation environment.
\end{abstract}




\section{Introduction}
\label{sec:introduction}

Real-time understanding, accurate modeling, and efficient storage of a robot's environment are key capabilities for autonomous operation. Occupancy grid mapping \cite{occ_mapping_1, occ_mapping_2} is a simple, yet widely used and effective, technique for distinguishing between traversable and occupied space surrounding a mobile robot. However, as our vision of delegating increasingly sophisticated tasks to autonomous robots expands, so should the representation power of online mapping algorithms. Augmenting traditional geometric models with semantic information about the context and object-level structure of the environment has become a mainstream problem in robotics \cite{semantic_1,semantic_2,semantic_3}. Robots are also increasingly expected to operate in unknown environments, with little to no prior information, in applications such as disaster response, environmental monitoring, and reconnaissance. This calls for algorithms allowing robots to autonomously explore unknown environments and construct low-uncertainty metric-semantic maps in real-time, while taking collision and visibility constraints into account.

This paper considers the active metric-semantic mapping problem, requiring a robot to explore and map an unknown environment, relying on streaming distance and object category observations, e.g., generated by semantic segmentation over RGBD images \cite{bonnet}. Our approach extends information-theoretic active mapping techniques \cite{julian, csqmi, fsmi} from binary to multi-class environment representations. We introduce a Bayesian multi-class mapping procedure which maintains a probability distribution over semantic categories and updates it via a probabilistic range-category perception model. Our main \textbf{contribution} is the derivation of a closed-form efficiently computable lower bound for the Shannon mutual information between a multi-class occupancy map and a set of range-category measurements. This lower bound allows rapid evaluation of many potential robot trajectories online to (re-)select one that leads to the best trade-off between uncertainty reduction and efficient exploration of the metric-semantic map. Unlike traditional class-agnostic exploration methods, our model and information measure incorporate the uncertainty of different semantic classes, leading to faster and more accurate exploration. The proposed approach relies on general range and class measurements and general motion kinematics, making it suitable for either ground or aerial robots, equipped with either camera or Lidar sensors, exploring either indoor or outdoor environments.


\begin{figure}[t]
  \centering
  \includegraphics[width=\linewidth]{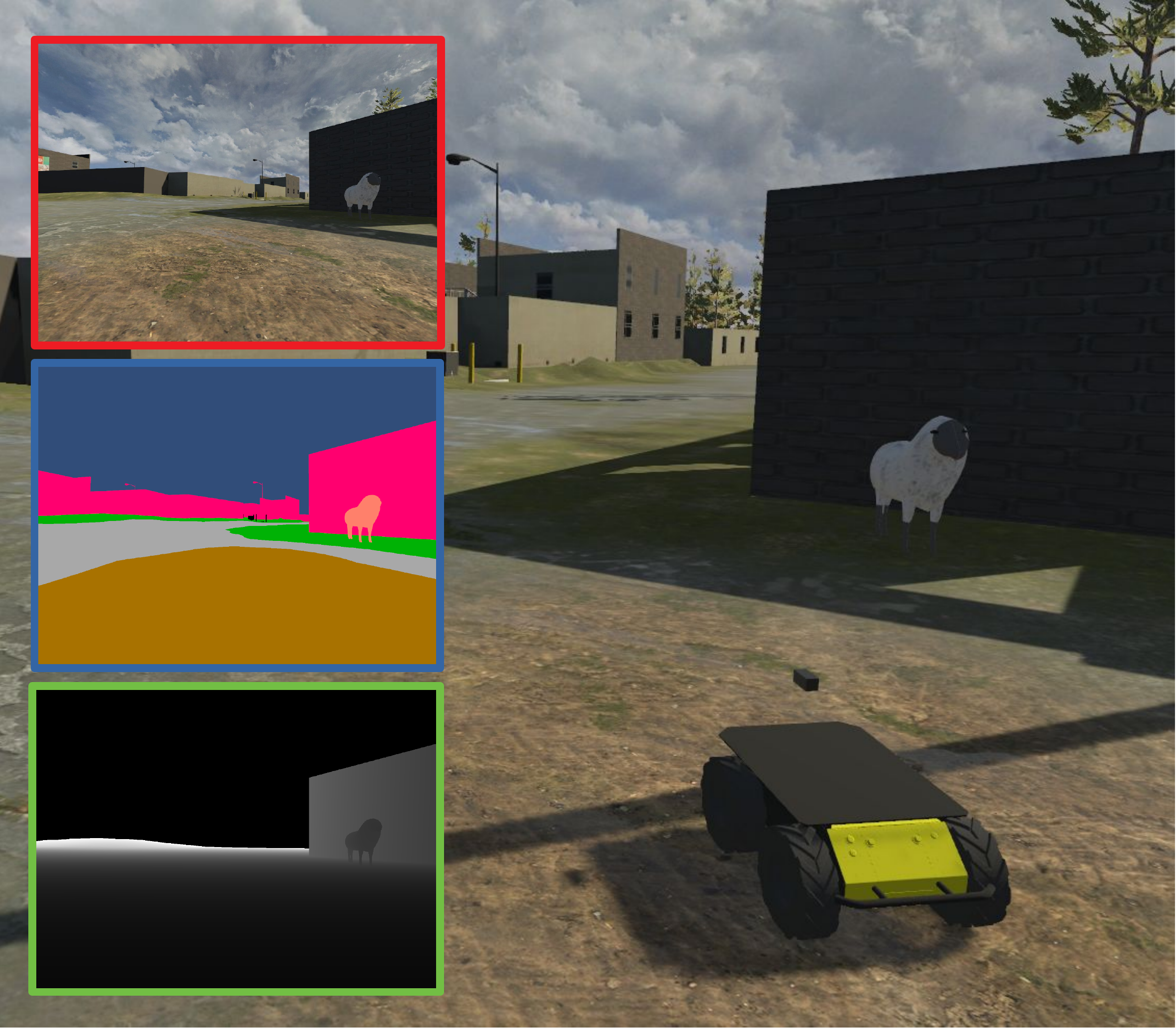}
  \caption{A robot autonomously explores an unknown environment using an RGBD sensor and a semantic segmentation algorithm.}
    \label{fig:prob_state}
\end{figure}

\section{Related Work}
\label{sec:related_work}

Frontier-based exploration \cite{frontier} is a seminal work that highlights the utility of autonomous exploration and active mapping. It inspired methods \cite{geo_exp_1, geo_exp_2} that rely on geometric features, such as the boundaries between free and unknown space (frontiers) and the volume that would be revealed by new sensor observations. Due to their intuitive formulation and low computational requirements, geometry-based methods continue to be widely employed in active perception. Recent works include semantics-assisted indoor exploration \cite{geo_exp_3}, active 3D coverage and reconstruction \cite{geo_exp_4}, and Laplace potential fields for safe outdoor exploration \cite{geo_exp_5}.




Alternative techniques for active mapping use probabilistic environment models and information-theoretic utility functions to measure and minimize the model uncertainty. The work by Elfes \cite{info_exp_1} is among the first to propose an information-based utility function. Information-based exploration strategies have been devised for uncertainty minimization in robot localization or environment mapping \cite{info_exp_2, info_exp_3, info_exp_4}. Information-theoretic objectives, however, require integration over the potential sensor measurements, limiting the use of direct numerical approximations to short planning horizons. Kollar and Roy \cite{info_exp_5} formulated active mapping using an extended Kalman filter and proposed a local-global optimization, leading to significant gains in efficiency for uncertainty computation and long-horizon planning. Unlike geometry-based approaches, information-theoretic exploration can be directly formulated for active simultaneous localization and mapping (SLAM) \cite{active_slam_1,active_slam_2,active_slam_3,active_slam_4}, aiming to determine a sensing trajectory that balances robot state uncertainty and visitation of unexplored map regions. Stachniss et al. \cite{info_exp_7} approximate information gain for a Rao-blackwellized particle filter over the joint state of robot pose and map occupancy. Julian et al. \cite{julian} prove that, for range measurements, the Shannon mutual information is maximized over trajectories that visit unexplored areas. However, without imposing further structure over the observation model, computing the mutual information objective requires numerical integration. The need for efficient mutual information computation becomes evident in 3-D environments. Cauchy-Schwarz quadratic mutual information (CSQMI) \cite{csqmi} and fast Shannon mutual information (FSMI) \cite{fsmi} offer efficiently computable closed-form objectives for active occupancy mapping with range measurements. Henderson et al. \cite{info_exp_11} propose an even faster computation based on a recursive expression for Shannon mutual information in continuous maps.



This paper is most related to CSQMI~\cite{csqmi} and FSMI~\cite{fsmi} in that it develops a closed-form expression for mutual information. However, instead of a binary map and range-only measurements, our formulation considers a multi-class map with Bayesian updates from range-category measurements.



\section{Problem Statement}
\label{sec:problem_statement}

Consider a robot with pose $\bfX_t \in SE(3)$ at time $t$ and deterministic discrete-time kinematics:
\begin{equation}
    \bfX_t := \begin{bmatrix} \bfR_t & \bfp_t \\ \mathbf{0}^\top & 1 \end{bmatrix}, \qquad \bfX_{t+1} =  \bfX_{t}\exp{(\hat{\bfu}_t)},
\label{eq:dynamic_model}
\end{equation}
where $\bfR_t \in SO(3)$ is the robot orientation, $\bfp_t \in \mathbb{R}^3$ is the robot position, and $\bfu_t := [\bfv_t^\top, \bfomega_t^\top]^\top \in \calU \subset \mathbb{R}^6$ is the control input, consisting of linear velocity $\bfv_t \in \mathbb{R}^3$ and angular velocity $\bfomega_t \in \mathbb{R}^3$. The function $\hat{(\cdot)}: \bbR^6 \rightarrow \mathfrak{se}(3)$ maps vectors in $\mathbb{R}^6$ to the Lie algebra $\mathfrak{se}(3)$. The robot is navigating in an environment represented as a collection of disjoint sets $\calE_k \subset \mathbb{R}^3$, each associated with a semantic category $k \in \calK := \crl{0,1,\ldots,K}$. Let $\calE_0$ denote free space, while each $\calE_k$ for $k >0$ represents a different category, such as buildings, animals, terrain (see Fig.~\ref{fig:prob_state}).



We assume that the robot is equipped with a sensor that provides information about the distance to and semantic categories of surrounding objects along a set of rays $\calF := \crl{\bfeta_b}_b$, where $\bfeta_b \in \mathbb{R}^3$ with $\|\bfeta_b\|_2 = r_{max}$ and $r_{max} > 0$ is the maximum sensing range.

\begin{definition}
A \emph{sensor observation} at time $t$ from robot pose $\bfX_t$ is a collection $\calZ_t := \crl{\bfz_{t,b}}_b$ of range and category measurements $\bfz_{t,b}:= (r_{t,b}, y_{t,b}) \in \mathbb{R}_{\geq 0} \times \calK$, acquired along the sensor rays $\bfR_t\bfeta_b$ with $\bfeta_b \in \calF$ at robot position $\bfp_t$. 
\end{definition}





Such information may be obtained by processing the observations of an RGBD camera or a Lidar with a semantic segmentation algorithm~\cite{bonnet}. The goal is to construct a multi-class map $\bfm$ of the environment based on the labeled range measurements. We model $\bfm$ as a grid of cells $m_i$, $i \in \calI := \{1, \ldots, N\}$, each labeled with a category $m_i \in \calK$. In order to model noisy sensor observations, we consider a probability density function (PDF) $p(\calZ_t \mid \bfm, \bfX_t)$. This observation model allows integrating the measurements into a probabilistic map representation using Bayesian updates. Let $p_t(\bfm) := p(\bfm \mid \calZ_{1:t}, \bfX_{1:t})$ be the PDF of the map $\bfm$ given the robot trajectory $\bfX_{1:t}$ and observations $\calZ_{1:t}$ up to time $t$. Given a new observation $\calZ_{t+1}$ obtained from robot pose $\bfX_{t+1}$, the Bayesian update to the map PDF is:
\begin{equation}
\label{eq:bayes_rule}
p_{t+1}(\bfm) \propto p(\calZ_{t+1} | \bfm, \bfX_{t+1}) p_t(\bfm).
\end{equation}
We assume that the robot pose is known and omit the dependence of the map distribution and the observation model on it for brevity. We consider the following problem.



%
\begin{problem}
Given a prior map PDF $p_t(\bfm)$ and a finite planning horizon $T$, find a control sequence $\bfu_{t:t+T-1}$ for the model in \eqref{eq:dynamic_model} that maximizes the ratio of mutual information $I\prl{\bfm; \calZ_{t+1:t+T} \mid \calZ_{1:t}}$ between the map $\bfm$ and future sensor observations $\calZ_{t+1:t+T}$ and the motion cost $J(\bfX_{t:t+T-1},\bfu_{t:t+T-1})$ of the planned robot trajectory:
\begin{equation}
\label{eq:sem_exp}
\begin{aligned}
\max_{\bfu_{t:t+T-1}} \frac{I\prl{\bfm; \calZ_{t+1:t+T} \mid \calZ_{1:t}}}{J(\bfX_{t:t+T-1}, \bfu_{t:t+T-1})} 
 \;\;\text{subject to}\;\;  \eqref{eq:dynamic_model}, \eqref{eq:bayes_rule}.
\end{aligned}
\end{equation}
\end{problem}


The precise definitions of the mutual information and motion cost terms above are:
\begin{align}
&I\prl{\bfm; \calZ_{t+1:t+T} | \calZ_{1:t}} := \!\sum_{\bfm \in \calK^N} \!\int \cdots \int p(\bfm, \calZ_{t+1:t+T} | \calZ_{1:t}) \notag\\
& \quad \times \log \frac{p(\bfm, \calZ_{t+1:t+T} \mid \calZ_{1:t})}{p(\bfm | \calZ_{1:t})p(\calZ_{t+1:t+T}|\calZ_{1:t})} \prod_{\tau = 1}^T \prod_b d\bfz_{t+\tau,b} \label{eq:mutual-information}\\
&J(\bfX_{t:t+T-1}, \bfu_{t:t+T-1}) := q(\bfX_{t+T}) + \sum_{\tau = 0}^{T-1} c(\bfX_{t+\tau},\bfu_{t+\tau}), \notag
\end{align}
where $q(\bfX), c(\bfX,\bfu) > 0$ model terminal and stage motion costs (e.g., distance traveled, elapsed time), respectively.

We develop a multi-class extension to the log-odds occupancy mapping algorithm \cite[Ch.~9]{ProbabilisticRoboticsBook} in Sec.~\ref{sec:bayes_multi_class_mapping} and derive an efficient approximation to the mutual information term in Sec.~\ref{sec:info_plan}. This allows us to evaluate potential robot trajectories online and (re-)select the one that maximizes the objective in~\eqref{eq:sem_exp}, leading to efficient active multi-class mapping.


\section{Bayesian Multi-class Mapping}
\label{sec:bayes_multi_class_mapping}


This section derives the Bayesian update in \eqref{eq:bayes_rule}, using a \emph{multinomial logit model} to represent $p_t(\bfm)$. To ensure that the number of parameters in the model scales linearly with the map size $N$, we maintain a factorized PDF over the cells:
\begin{equation}
\label{eq:pdf_factorization}
p_t(\bfm) = \prod_{i=1}^N p_t(m_i).
\end{equation}
We represent the individual cell PDFs $p_t(m_i)$ over $\calK$ using a vector of log odds:
\begin{equation}
\bfh_{t,i} := \begin{bmatrix} \log \frac{p_t(m_i = 0)}{p_t(m_i = 0)} & \cdots & \log \frac{p_t(m_i = K)}{p_t(m_i = 0)} \end{bmatrix}^\top \!\!\in \mathbb{R}^{K+1},
\end{equation}
where the free-class likelihood $p_t(m_i = 0)$ is used as a pivot. Given the log-odds vector $\bfh_{t,i}$, the PDF of cell $m_i$ may be recovered using the softmax function $\sigma:\mathbb{R}^{K+1} \mapsto \mathbb{R}^{K+1}$:
\begin{equation}
p_t(m_i = k) = \sigma_{k+1}(\bfh_{t,i}) := \frac{\bfe_{k+1}^\top \exp(\bfh_{t,i})}{ \mathbf{1}^\top \exp(\bfh_{t,i})},
\end{equation}
where $\bfe_k$ is the standard basis vector with $k$-th element equal to $1$ and $0$ elsewhere, and $\mathbf{1}$ is the vector with all elements equal to $1$. To derive Bayes rule for the log-odds $\bfh_{t,i}$, we need to specify an observation model for the measurements.


\begin{definition}
The \emph{inverse observation model} of a range-category measurement $\bfz$ obtained from robot pose $\bfX$ along sensor ray $\bfeta \in \calF$ with respect to map cell $m_i$ is a probability density function $p(m_i | \bfz; \bfX, \bfeta)$.
\end{definition}

The Bayesian update in \eqref{eq:bayes_rule} for $\bfh_{t,i}$ can be obtained in terms of the range-category inverse observation model, evaluated at the new measurement set $\calZ_{t+1}$.

\begin{proposition}
\label{prop:log-odds-bayes-rule}
Let $\bfh_{t,i}$ be the log odds of cell $m_i$ at time $t$. Given sensor observation $\calZ_{t+1}$, the posterior log-odds are:
\begin{equation}
\label{eq:log-odds-bayes-rule}
\bfh_{t+1,i} = \bfh_{t,i} + \sum_{\bfz \in \calZ_{t+1}} \prl{ \bfl_i(\bfz) - \bfh_{0,i}}
\end{equation}
where $\bfl_i(\bfz)$ is the inverse observation model log odds:
\begin{equation}
\bfl_i(\bfz) := \begin{bmatrix} \log \frac{p(m_i = 0 | \bfz)}{p(m_i = 0 | \bfz)} & \cdots & \log \frac{p(m_i = K | \bfz)}{p(m_i = 0 | \bfz)} \end{bmatrix}^\top.
\end{equation}
\end{proposition}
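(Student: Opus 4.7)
The plan is to derive \eqref{eq:log-odds-bayes-rule} by combining the Bayesian update \eqref{eq:bayes_rule} with the factorization \eqref{eq:pdf_factorization}, converting the forward observation model into the inverse observation model via Bayes rule per measurement, and finally taking log-ratios against the $m_i = 0$ pivot.

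\textbf{Step 1 (per-cell update).} I would first invoke conditional independence of the individual range-category measurements in $\calZ_{t+1}$ given the map, so that $p(\calZ_{t+1}\mid \bfm) = \prod_{\bfz \in \calZ_{t+1}} p(\bfz \mid \bfm)$. Combined with \eqref{eq:pdf_factorization}, marginalizing out $\{m_j\}_{j\neq i}$ in \eqref{eq:bayes_rule} reduces the joint update to a per-cell update
\[
p_{t+1}(m_i) \;\propto\; p_t(m_i)\prod_{\bfz \in \calZ_{t+1}} p(\bfz \mid m_i).
\]

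\textbf{Step 2 (swap to the inverse model).} For each measurement factor I would apply Bayes rule under the initial prior, $p(\bfz \mid m_i) = p(m_i \mid \bfz)\, p(\bfz)/p_0(m_i)$. Since $p(\bfz)$ does not depend on $m_i$, it is absorbed into the proportionality constant, leaving
\[
p_{t+1}(m_i) \;\propto\; p_t(m_i)\prod_{\bfz \in \calZ_{t+1}} \frac{p(m_i\mid \bfz)}{p_0(m_i)}.
\]

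\textbf{Step 3 (take log-odds).} Dividing the value at $m_i = k$ by the value at the pivot $m_i = 0$ cancels the normalizing constant, and taking logarithms yields, for each $k \in \calK$,
\[
\log\frac{p_{t+1}(m_i=k)}{p_{t+1}(m_i=0)} = \log\frac{p_t(m_i=k)}{p_t(m_i=0)} + \sum_{\bfz\in\calZ_{t+1}}\!\!\left[\log\frac{p(m_i=k\mid \bfz)}{p(m_i=0\mid \bfz)} - \log\frac{p_0(m_i=k)}{p_0(m_i=0)}\right].
\]
Stacking these scalar identities for $k = 0, 1, \ldots, K$ into a vector in $\bbR^{K+1}$ recovers exactly \eqref{eq:log-odds-bayes-rule}.

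The step I expect to require the most care is bookkeeping the prior: each inverted measurement factor contributes one copy of $p_0(m_i)$ in the denominator, and it is precisely these prior log-ratios (one per measurement in $\calZ_{t+1}$) that assemble into the subtracted $\bfh_{0,i}$ terms in \eqref{eq:log-odds-bayes-rule}. Once this correspondence is made explicit, the remaining algebra is a routine manipulation of the softmax/log-odds parameterization, and no additional structural assumption beyond measurement conditional independence is needed.
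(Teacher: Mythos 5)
Your proposal is correct and follows essentially the same route as the paper's proof: both combine the factorization in \eqref{eq:pdf_factorization} with Bayes rule to swap the forward model for the inverse observation model, divide by the $m_i=0$ pivot to cancel the measurement-dependent normalizer, and take logs to obtain the additive update, with the prior term $\bfh_{0,i}$ appearing once per measurement. The only cosmetic difference is that the paper derives the single-measurement identity and applies it recursively over $\bfz \in \calZ_{t+1}$, whereas you invoke conditional independence up front and expand the product in one pass; these are equivalent.
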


\begin{proof}
See Appendix~\ref{app:log-odds-bayes-rule}.
\end{proof}


To complete the Bayesian multi-class mapping algorithm suggested by \eqref{eq:log-odds-bayes-rule} we need a particular inverse observation model. When a sensor measurement is generated, the sensor ray continues to travel until it hits an obstacle of category $\calK \setminus \{0\}$ or reaches the maximum sensing range $r_{max}$. The resulting labeled range measurement $\bfz = (r,y)$ indicates that map cell $m_i$ is occupied if the measurement end point $\bfp + \frac{r}{r_{max}} \bfR \bfeta$ lies in the cell. If $m_i$ lies along the sensor ray but does not contain the end point, it is observed as free. Finally, if $m_i$ is not intersected by the sensor ray, no information is provided about its occupancy. The map cells along the sensor ray can be determined by a rasterization algorithm, such as Bresenham's line algorithm \cite{bresenham}. We parameterize the inverse observation model log-odds vector as:
\begin{gather}
\label{eq:log_inverse_observation_model}
\bfl_i((r,y)) := \begin{cases}
\bfphi^+ + \bfE_{y+1}\bfpsi^+, & \text{$r$ indicates $m_i$ is occupied},\\
\bfphi^-, & \text{$r$ indicates $m_i$ is free},\\
\bfh_{0,i}, & \text{otherwise},
\end{cases}
\raisetag{3ex}
\end{gather}
where $\bfE_k := \bfe_k \bfe_k^\top$ and $\bfpsi^+\!, \bfphi^-\!, \bfphi^+ \in \mathbb{R}^{K+1}$ are parameter vectors, whose first element is $0$ to ensure that $\bfl_i(\bfz)$ is a valid log-odds vector. This parameterization leads to inverse observation model $p(m_i = k | \bfz) = \sigma_{k+1}(\bfl_i(\bfz))$, which is piece-wise constant along the sensor ray.


To compute the mutual information between an observation sequence $\calZ_{t+1:t+T}$ and the map $\bfm$ in the next section, we will also need the PDF of a range-category measurement $\bfz_{\tau,b} \in \calZ_{t+1:t+T}$ conditioned on $\calZ_{1:t}$. Let $\calR_{\tau,b}(r) \subset \calI$ denote the set of map cell indices along the ray $\bfR_\tau \bfeta_b$ from the robot position $\bfp_\tau$ with length $r$. Let $\gamma_{\tau,b}(i)$ denote the distance traveled by the ray $\bfR_\tau \bfeta_b$ within cell $m_i$ and $i_{\tau,b}^* \in \calR_{\tau,b}(r)$ denote the index of the cell hit by $\bfz_{\tau,b}$. We define the PDF of $\bfz_{\tau,b} = (r, y)$ conditioned on $\calZ_{1:t}$ as:
\begin{gather}
\label{eq:cond_prob_approx}
p(\bfz_{\tau,b} | \calZ_{1:t}) = \frac{p_t(m_{i_{\tau,b}^*} = y)}{\gamma_{\tau,b}(i_{\tau,b}^*)} \negquad \prod_{i \in \calR_{\tau,b}(r) \setminus \{i_{\tau,b}^*\}} \negquad p_t(m_i = 0).
\raisetag{2ex}
\end{gather}
This definition states that the likelihood of $\bfz_{\tau,b}= (r,y)$ at time $t$ depends on the likelihood that the cells $m_i$ along the ray $\bfR_{\tau}\bfeta_b$ of length $r$ are empty and the likelihood that the hit cell $m_{i_{\tau,b}^*}$ has class $y$. A similar model for binary observations has been used in \cite{julian,csqmi,fsmi}. Knowing how an observation affects the map PDF $p_t(\bfm)$, we now switch our focus to computing of the mutual information between a sequence of observations $\calZ_{t+1:t+T}$ and the multi-class occupancy map $\bfm$.

\section{Informative Planning}
\label{sec:info_plan}

Computing the mutual information term in \eqref{eq:mutual-information} is challenging because it involves integration over all possible values of the observation sequence $\calZ_{t+1:t+T}$. Our main result is an efficiently computable lower bound on $I\prl{\bfm; \calZ_{t+1:t+T} | \calZ_{1:t}}$ for range-category observations $\calZ_{t+1:t+T}$ and a multi-class occupancy map $\bfm$. The result is obtained by selecting a subset $\underline{\calZ}_{t+1:t+T} = \crl{\bfz_{\tau,b}}_{\tau=t+1,b=1}^{t+T,B}$ of the observations $\calZ_{t+1:t+T}$ in which the sensor rays are non-overlapping. Precisely, any pair of measurements $\bfz_{\tau,b}$, $\bfz_{\tau',b'} \in \underline{\calZ}_{t+1:t+T}$ satisfies:
\begin{equation}
\label{eq:nonoverlapping}
\calR_{\tau,b}(r_{max}) \cap \calR_{\tau',b'}(r_{max}) = \emptyset.
\end{equation}
In practice, constructing $\underline{\calZ}_{t+1:t+T}$ requires removing intersecting rays from $\calZ_{t+1:t+T}$ to ensure that the remaining observations are mutually independent. Consequently, the mutual information between $\bfm$ and $\underline{\calZ}_{t+1:t+T}$ can be obtained as a sum of mutual information terms between single rays $\bfz_{\tau,b} \in \underline{\calZ}_{t+1:t+T}$ and map cells $m_i$ observed by $\bfz_{\tau,b}$. This idea is inspired by CSQMI~\cite{csqmi} but we generalize their results to multi-class observations and map.

\begin{proposition}
\label{prop:mut_inf_semantic}
Given a sequence of labeled range observations $\calZ_{t+1:t+T}$, let $\underline{\calZ}_{t+1:t+T} = \crl{\bfz_{\tau,b}}_{\tau=t+1,b=1}^{t+T,B}$ be a subset of non-overlapping measurements that satisfy \eqref{eq:nonoverlapping}. Then, the Shannon mutual information between $\calZ_{t+1:t+T}$ and a multi-class occupancy map $\bfm$ can be lower bounded as:
\begin{equation}
\label{eq:mut_inf_semantic}
\begin{aligned}
I(\bfm; &\calZ_{t+1:t+T} | \calZ_{1:t}) \geq I\prl{\bfm; \underline{\calZ}_{t+1:t+T} | \calZ_{1:t}}\\
&=\sum_{\tau=t+1}^{t+T} \sum_{b=1}^{B} \sum_{k=1}^K \sum_{n=1}^{N_{\tau,b}} p_{\tau,b}(n, k) C_{\tau,b}(n, k),
\end{aligned}
\end{equation}
where $N_{\tau,b} := | \calR_{\tau,b}(r_{max}) |$,
\begin{equation*}
\begin{aligned}
p_{\tau,b}(n, k) &:= p_t(m_{i_{\tau,b}^*} = k) \prod_{i \in \Tilde{\calR}_{\tau,b}(n) \setminus \{i_{\tau,b}^*\}} p_t(m_i = 0),\\
C_{\tau,b}(n, k) &:= f(\bfphi^+ + \bfE_{k+1}\bfpsi^+ - \bfh_{0,i_{\tau,b}^*}, \bfh_{t,i_{\tau,b}^*})\\
& + \sum_{i \in \Tilde{\calR}_{\tau,b}(n) \setminus \{i_{\tau,b}^*\}} f(\bfphi^--\bfh_{0,i}, \bfh_{t,i}),\\
f(\bfphi, \bfh) &:= \log\prl{ \frac{\mathbf{1}^\top \exp(\bfh)}{\mathbf{1}^\top \exp(\bfphi + \bfh)} } + \bfphi^\top \sigma(\bfphi + \bfh),
\end{aligned}
\end{equation*}
and $\Tilde{\calR}_{\tau,b}(n) \subseteq \calR_{\tau,b}(r_{max})$ is the set of the first $n$ map cell indices along the ray $\bfR_{\tau}\bfeta_b$, i.e., $\Tilde{\calR}_{\tau,b}(n) := \{i \mid i \in \calR_{\tau,b}(r), |\calR_{\tau,b}(r)| = n, r \leq r_{max}\}$.
\end{proposition}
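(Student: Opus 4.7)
The plan is to prove the inequality and the closed-form expression in three stages. First, I would establish the lower bound $I(\bfm; \calZ_{t+1:t+T} | \calZ_{1:t}) \ge I(\bfm; \underline{\calZ}_{t+1:t+T} | \calZ_{1:t})$ directly from the chain rule of mutual information: writing $\calZ_{t+1:t+T}$ as the union of $\underline{\calZ}_{t+1:t+T}$ and its complement, the excess term is a conditional mutual information, which is non-negative, and dropping it yields the inequality.

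Second, I would decompose the right-hand side into a sum over individual rays. The non-overlap condition \eqref{eq:nonoverlapping}, together with the cell-wise factorization $p_t(\bfm) = \prod_i p_t(m_i)$ from \eqref{eq:pdf_factorization} and the fact that each $\bfz_{\tau,b}$ depends only on the cells along its own ray (as modeled by \eqref{eq:cond_prob_approx}), implies that the measurements in $\underline{\calZ}_{t+1:t+T}$ are mutually independent and touch disjoint cell subsets. Integrating out these disjoint cell groups gives the product form $p(\underline{\calZ}_{t+1:t+T} | \calZ_{1:t}) = \prod_{\tau,b} p(\bfz_{\tau,b} | \calZ_{1:t})$, from which $I(\bfm; \underline{\calZ}_{t+1:t+T} | \calZ_{1:t}) = \sum_{\tau,b} I(\bfm_{\calR_{\tau,b}(r_{max})}; \bfz_{\tau,b} | \calZ_{1:t})$ follows, where $\bfm_{\calR}$ denotes the sub-collection of cells indexed by $\calR$.

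Third, I would evaluate each per-ray mutual information as an expected KL divergence, $I(\bfm_{\calR_{\tau,b}(r_{max})}; \bfz_{\tau,b} | \calZ_{1:t}) = \bbE_{\bfz_{\tau,b} | \calZ_{1:t}} [ D_{KL}(p(\bfm_{\calR} | \bfz_{\tau,b}, \calZ_{1:t}) \| p_t(\bfm_{\calR})) ]$, and discretize the outcome of $\bfz_{\tau,b}$ by the pair $(n,k)$, where $n$ is the index of the cell hit along the ray and $k \in \{1,\ldots,K\}$ is its semantic class. Integrating the density \eqref{eq:cond_prob_approx} over the sub-interval of ranges whose endpoint lies inside cell $n$ contributes a factor $\gamma_{\tau,b}(i^*_{\tau,b})$ that cancels its denominator, producing the discrete probability $p_{\tau,b}(n,k)$. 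Because the Bayesian update from Proposition \ref{prop:log-odds-bayes-rule} is cell-wise, the posterior PDF of $\bfm_\calR$ remains factorized, so the KL divergence splits into a sum of per-cell KL terms over $\Tilde{\calR}_{\tau,b}(n)$. For each cell, the posterior log-odds equal $\bfh_{t,i} + \bfphi$ with $\bfphi = \bfphi^+ + \bfE_{k+1}\bfpsi^+ - \bfh_{0,i^*_{\tau,b}}$ at the hit cell and $\bfphi = \bfphi^- - \bfh_{0,i}$ at each preceding free cell, as dictated by \eqref{eq:log_inverse_observation_model}. A short softmax calculation then shows that the per-cell KL divergence is exactly $f(\bfphi, \bfh_{t,i})$: expanding $\log(\sigma_{k+1}(\bfh+\bfphi)/\sigma_{k+1}(\bfh))$ and summing against the posterior cancels the coordinate-wise $\bfh$ terms and reproduces the normalizer-log and inner-product pieces of $f$. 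Aggregating these per-cell contributions gives $C_{\tau,b}(n,k)$ and completes the claim.

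The main obstacle is the bookkeeping in the third stage: carefully matching the continuous range density in \eqref{eq:cond_prob_approx} to the discrete event ``the ray first strikes an obstacle of class $k$ in the $n$-th cell,'' verifying that the $1/\gamma_{\tau,b}$ factor cancels after integrating $r$ over that cell, and confirming that the cells before $i^*_{\tau,b}$ contribute via the ``free'' branch of \eqref{eq:log_inverse_observation_model} while all cells beyond contribute the prior branch and therefore produce zero KL shift. Once these ingredients are aligned, the remaining softmax identity reducing the per-cell KL divergence to $f(\bfphi,\bfh)$ is a routine computation that generalizes the binary CSQMI-style derivation to the multi-class logit model.
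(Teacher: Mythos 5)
Your proposal is correct and follows essentially the same route as the paper's proof: decompose the mutual information over the non-overlapping rays via independence, exploit the piece-wise constant structure of \eqref{eq:cond_prob_approx} to discretize the range integral so that the $\gamma_{\tau,b}$ factor cancels, and reduce each per-cell contribution to $f(\bfphi,\bfh)$ via the softmax/KL identity. The only cosmetic differences are that you keep the per-ray mutual information as a joint quantity over $\bfm_{\calR_{\tau,b}(r_{max})}$ and split the KL divergence afterward (equivalent here because the posterior stays factorized), and that you make the initial inequality explicit via the chain rule, which the paper leaves implicit.
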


\begin{proof}
See Appendix~\ref{app:mut-inf-semantic}.
\end{proof}

Prop.~\ref{prop:mut_inf_semantic} allows evaluating the informativeness according to \eqref{eq:sem_exp} of any potential robot trajectory $\bfX_{t:t+T}$, $\bfu_{t:t+T-1}$. We use a motion planning algorithm to obtain a set of trajectories to the map frontiers, determined from the current map PDF $p_t(\bfm)$. Alg.~\ref{alg:sem_exp} summarizes the procedure for determining a state-control trajectory $\bfX^*_{t:t+T}$, $\bfu^*_{t:t+T-1}$ that maximizes the objective in \eqref{eq:sem_exp}. This kinematically feasible trajectory can be tracked by a low-level controller that takes the robot dynamics into account. We evaluate the proposed active multi-class mapping algorithm next.

\begin{algorithm}[t]
\caption{Information-theoretic Path Planning}\label{alg:sem_exp}
\begin{algorithmic}[1]
  \renewcommand{\algorithmicrequire}{\textbf{Input:}}
  \renewcommand{\algorithmicensure}{\textbf{Output:}}
  \Require $\bfX_t$, $p_t(\bfm)$
  \State $\calF = \Call{findFrontiers}{p_t(\bfm)}$
  \For{$f \in \calF$}
    \State $\bfX_{t+1:t+T}, \bfu_{t:t+T-1} = \Call{planPath}{\bfX_t,p_t(\bfm),f}$
    \State Compute \eqref{eq:sem_exp} over $\bfX_{t:t+T}, \bfu_{t:t+T-1}$ via Prop.~\eqref{prop:mut_inf_semantic}
  \EndFor
  \State \Return $\bfX^*_{t:t+T}$, $\bfu^*_{t:t+T-1}$ with highest value
\end{algorithmic}
\end{algorithm}


\section{Experiments}
\label{sec:experiments}

We compare our active multi-class mapping algorithm against two baseline exploration strategies: frontier-based exploration (Frontier) \cite{frontier} and FSMI~\cite{fsmi}. We compare the methods in an active binary mapping scenario in Sec.~\ref{subsec:exp_bin} and active multi-class mapping scenario in Sec.~\ref{subsec:exp_homogen}. All three methods use our range-category sensor model in \eqref{eq:log_inverse_observation_model}
and our Bayesian multi-class mapping in \eqref{eq:log-odds-bayes-rule} but select informative robot trajectories $\bfX_{t:t+T}$, $\bfu_{t:t+T-1}$ based on their own criteria. Finally, in Sec.~\ref{subsec:exp_3d}, we apply our method in a 3-D simulated Unity environment to demonstrate large-scale realistic active multi-class mapping.

To identify frontiers we apply edge detection on the most likely map at time $t$ (the mode of $p_t(\bfm)$). Then, we cluster the edge cells by detecting the connected components of the boundaries between explore and unexplored space. We plan a path from the robot pose $\bfX_t$ to the center of each frontier using $A^*$ graph search. To ensure feasibility of the planned paths, we fit a piece-wise polynomial trajectory to each path and provide it to a low-level controller to generate $\bfu_{t:t+T-1}$.

\begin{figure}[t]
  \includegraphics[width=0.48\linewidth]{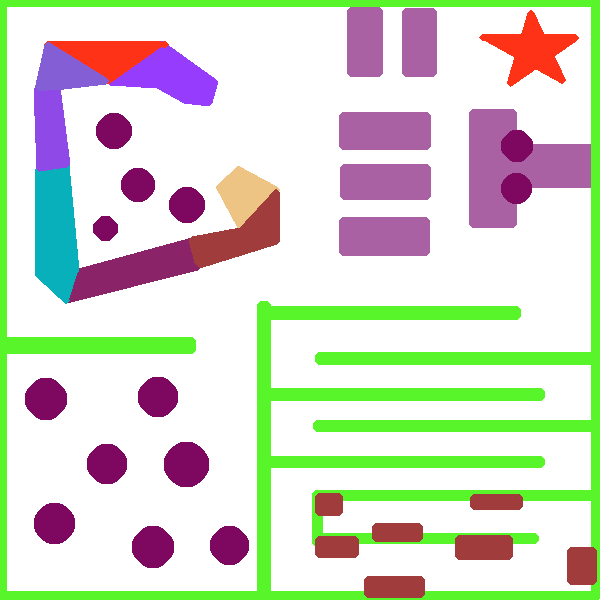}%
  \hfill%
  \includegraphics[width=0.48\linewidth]{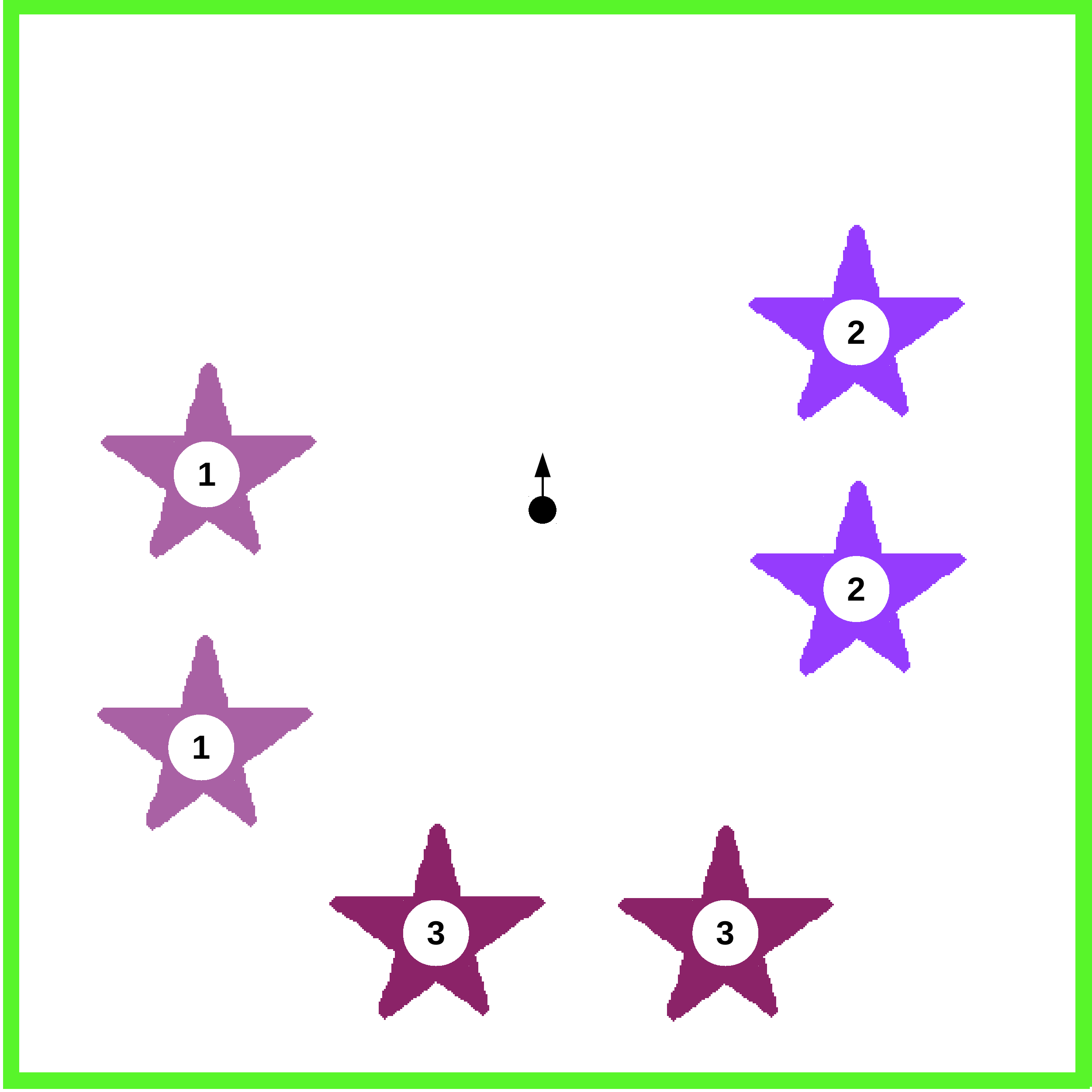}
  \caption{Synthetic environments used for comparisons among frontier-based exploration \cite{frontier}, FSMI \cite{fsmi}, and our approach. Different semantic categories are represented by distinct colors.}
  \label{fig:homogen_true_map}
\end{figure}

\begin{figure}[t]
  \centering
  \includegraphics[width=\linewidth]{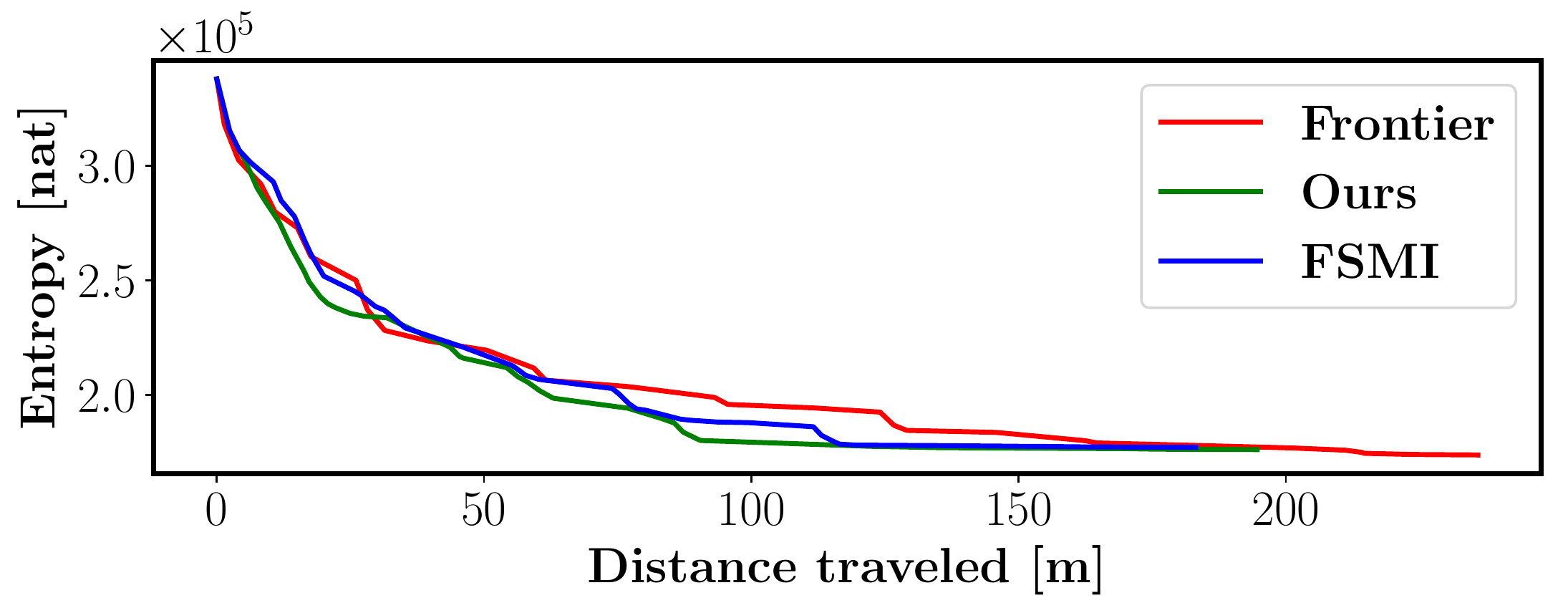}\\
  \includegraphics[width=\linewidth]{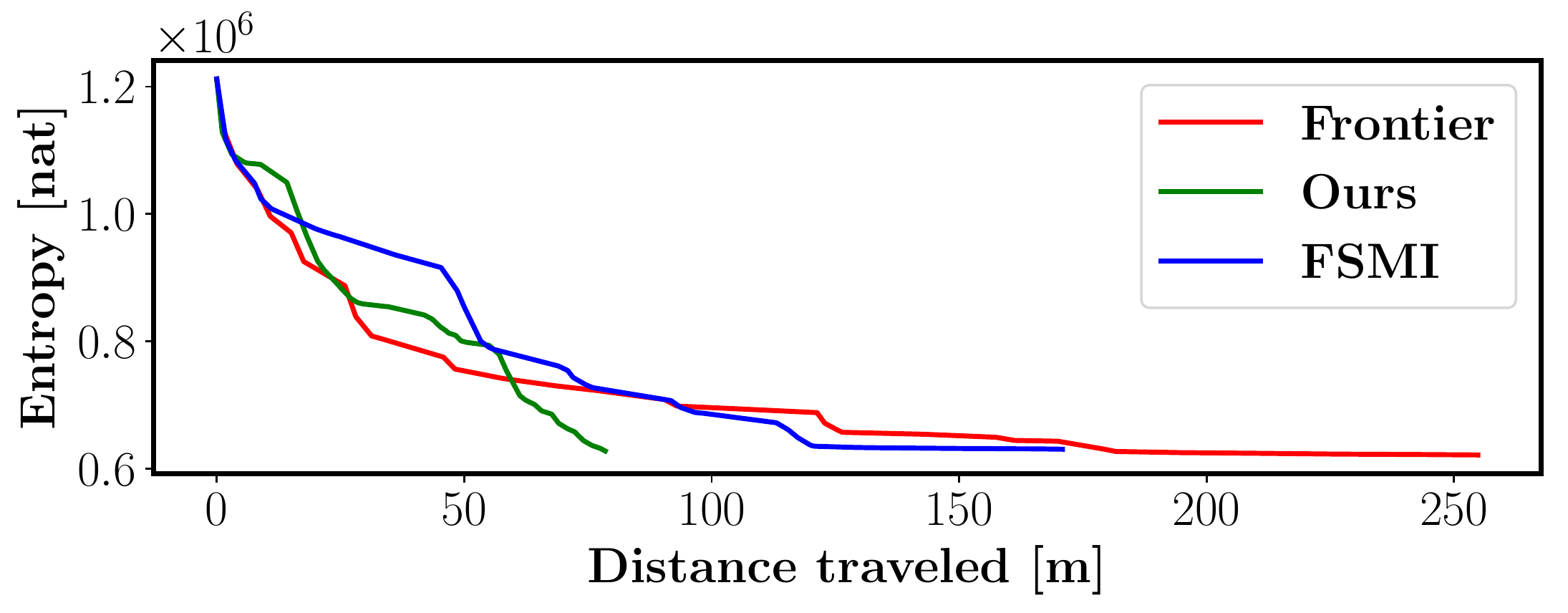}
  \caption{Simulation results for active binary occupancy mapping (top) and active multi-class occupancy mapping (bottom) on the environments in Fig.~\ref{fig:homogen_true_map}. In both cases, the results are averaged over $5$ random experiments with random initial robot pose.}
\label{fig:2d_results}
\end{figure}




\subsection{2-D Binary Exploration}
\label{subsec:exp_bin}

We consider active binary occupancy mapping first. We compare our method against Frontier and FSMI in synthetic 2-D environments (Fig.~\ref{fig:homogen_true_map}). A 2-D LiDAR sensor is simulated with additive Gaussian noise $\calN(0,0.03)$. Fig.~\ref{fig:2d_results} (top) compares the map entropy reduction per distance traveled among the three methods. Our method performs similarly to FSMI in that both achieve low map entropy by traversing significantly less distance compared to Frontier.

\subsection{2-D Multi-class Exploration}
\label{subsec:exp_homogen}

Next, we use the same 2-D environments in Fig.~\ref{fig:homogen_true_map} but introduce range-category measurements. Range measurements are subject to additive Gaussian noise $\calN(0,0.03)$, while category measurement have a uniform misclassification probability of $0.2$. Fig.~\ref{fig:2d_results} (bottom) compares the multi-class entropy reduction per distance travelled for all three strategies. Our method reaches the same level of map entropy as FSMI and Frontier but traverses a noticeable shorter distance. This can be attributed to the fact that only our method distinguishes map cells whose occupancy probabilities are the same but their per-class probabilities differ from each other.

\subsection{3-D Multi-class Exploration}
\label{subsec:exp_3d}

Finally, we evaluate our active multi-class mapping algorithm in a photo-realistic 3-D Unity simulation, shown in Fig.~\ref{fig:3d_sim_env} (f). We use a Husky robot equipped with an RGBD camera and run a semantic segmentation algorithm over the RGB images. The range measurements have an additive Gaussian noise of $\calN(0,0.1)$. The semantic segmentation algorithm detects the true class with a probability of $0.95$. We implemented an octree version of our Bayesian multi-class mapping algorithm and information computation to scale the method to a large 3-D environment. Our implementation extends the OctoMap algorithm \cite{octomap} to multi-class log-odds mapping as described in Sec.~\ref{sec:bayes_multi_class_mapping}. The octree map is compressed by merging cells whose multi-class probabilities reach the same max or min threshold. This allows compressing the majority of the free cells into few large cells. The details of the octree implementation of the active multi-class mapping algorithm will be presented in a follow-up paper. Fig.~\ref{fig:3d_sim_env} (a)-(e) shows several iterations of the exploration process. Fig.~\ref{fig:3d_sim_res} shows the change in map entropy versus elapsed time and the classification precision for every observed semantic class.


\begin{figure}[t]
  \centering
  \includegraphics[width=\linewidth]{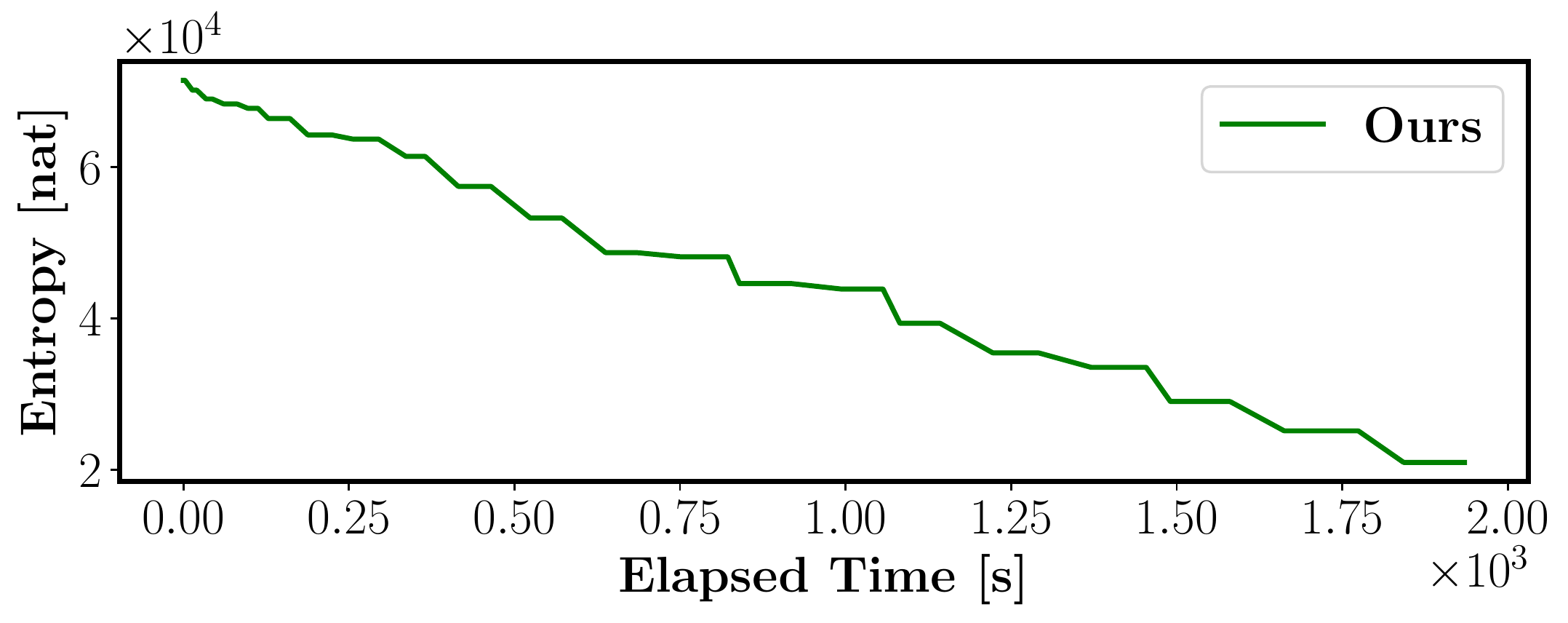}\\
  \includegraphics[width=\linewidth,trim=0mm 0mm 0mm 10mm, clip]{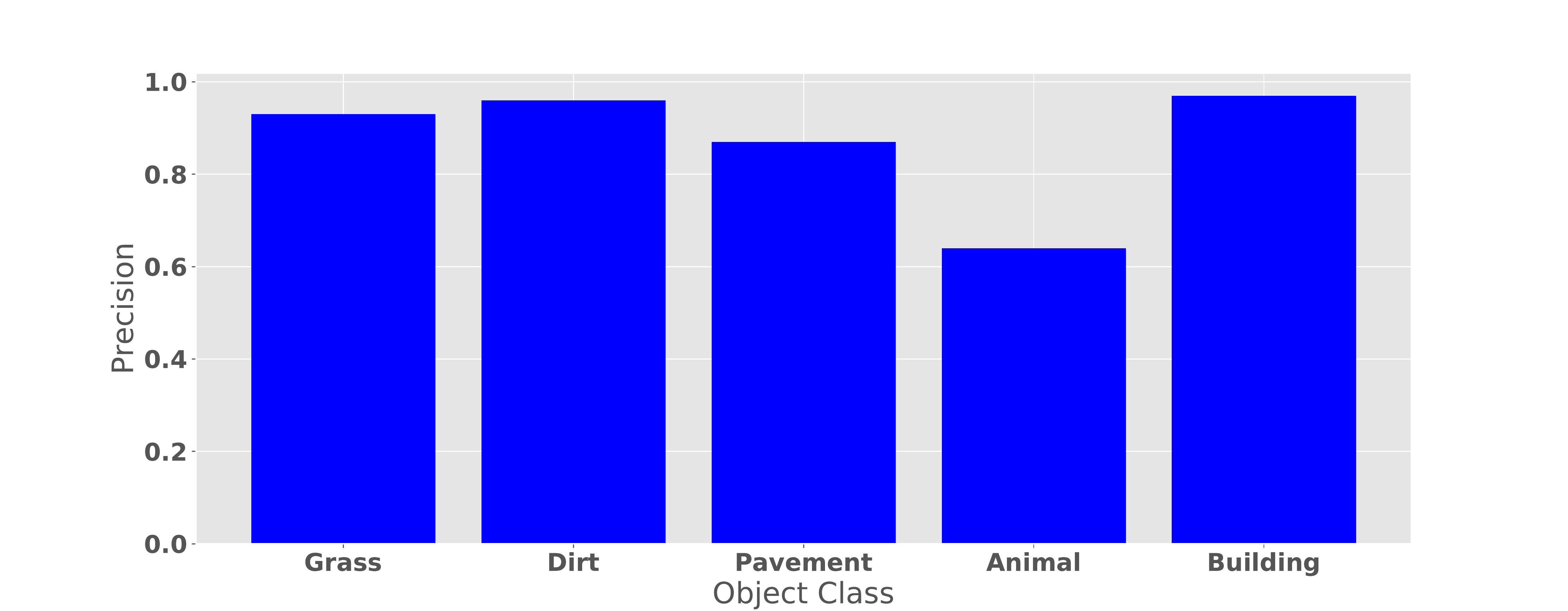}
  \caption{(top): Entropy versus time for 3-D simulations. The flat line segments represent the planning phase. (bottom): Precision for observed semantic classes.}
\label{fig:3d_sim_res}
\end{figure}

\begin{figure*}[t]
    \begin{subfigure}[t]{0.24\linewidth}
    \includegraphics[width=\linewidth]{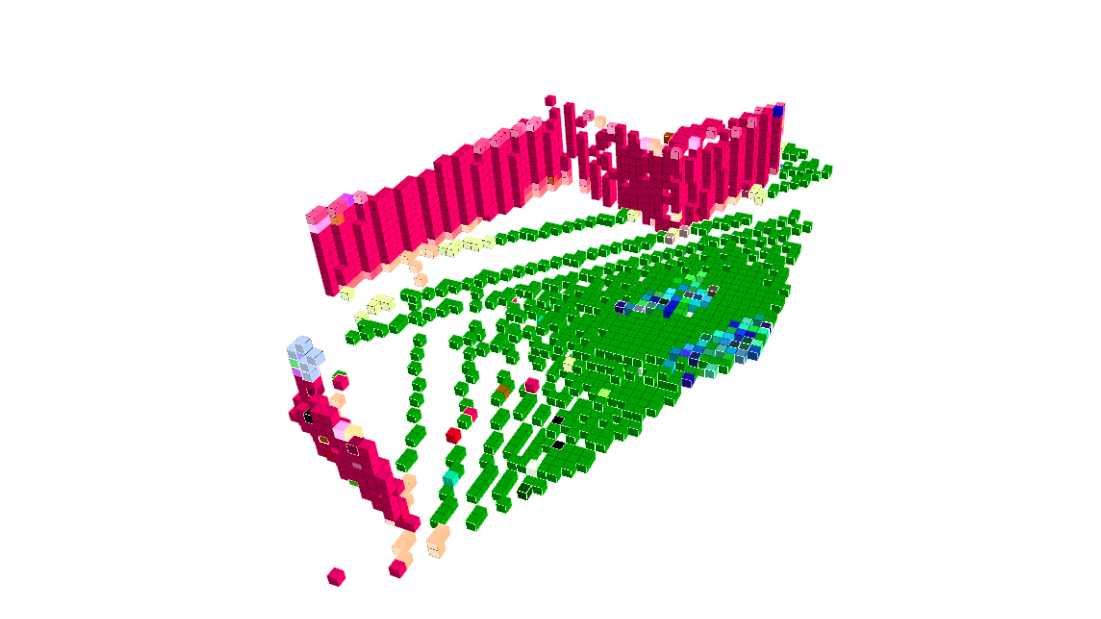}
    \caption{The robot begins exploration.}
    \end{subfigure}%
    \hfill%
    \begin{subfigure}[t]{0.24\linewidth}
    \includegraphics[width=\linewidth]{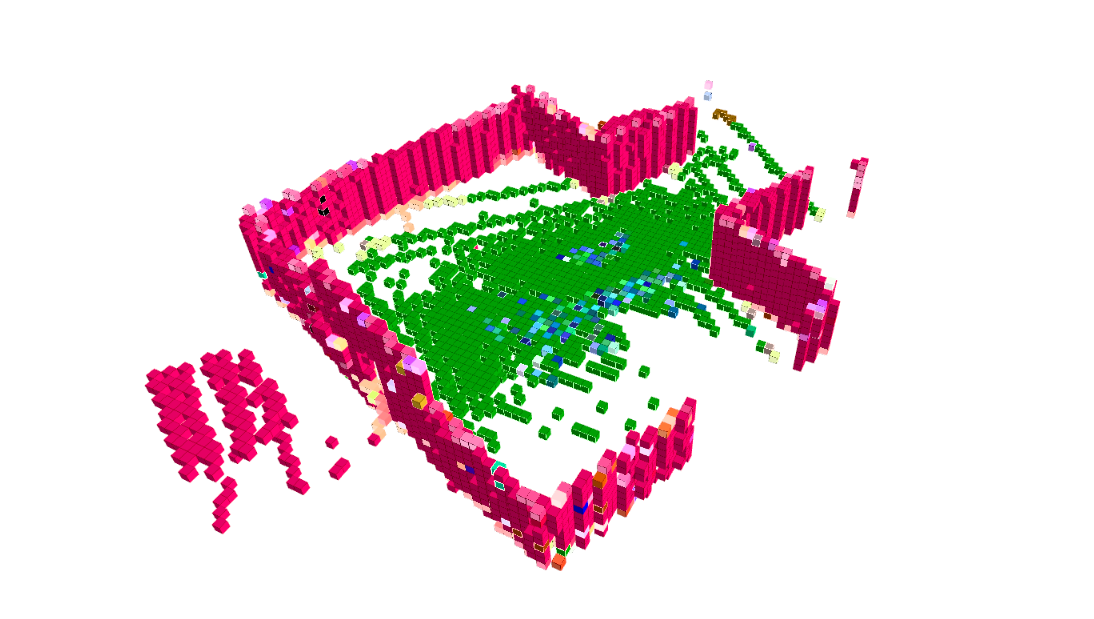}
    \caption{After $2$ iterations, the robot moves north to explore a larger unknown area.}
    \end{subfigure}%
    \hfill%
    \begin{subfigure}[t]{0.24\linewidth}
    \includegraphics[width=\linewidth]{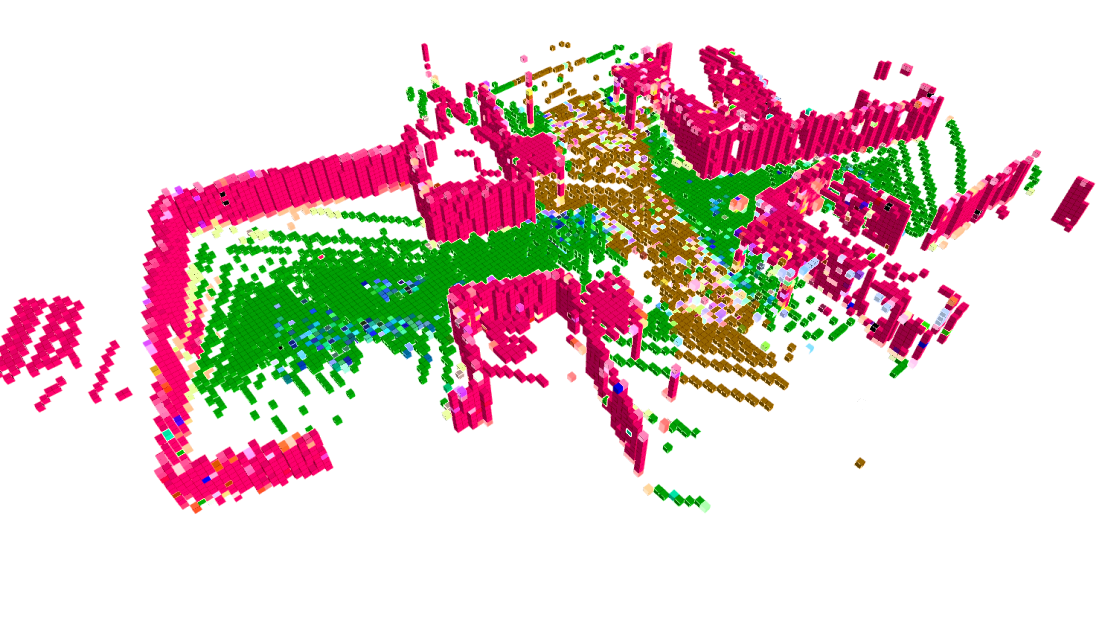}
    \caption{At iteration $8$, the robot continues exploration by visiting unknown sections in the north, east, and west.}
    \end{subfigure}%
    \hfill%
    \begin{subfigure}[t]{0.24\linewidth}
    \includegraphics[width=\linewidth]{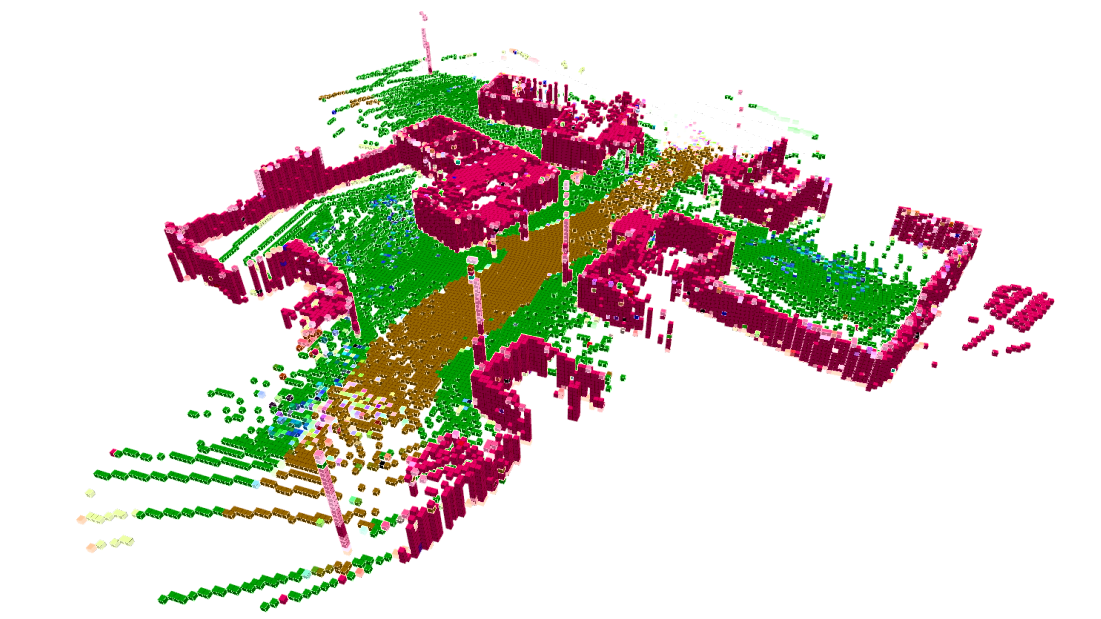}
    \caption{The robot begins to visit previously explored areas to fill partially observed objects at iteration $19$.}
    \end{subfigure}\\
    \begin{subfigure}[t]{0.49\linewidth}
    \centering
    \includegraphics[width=\linewidth]{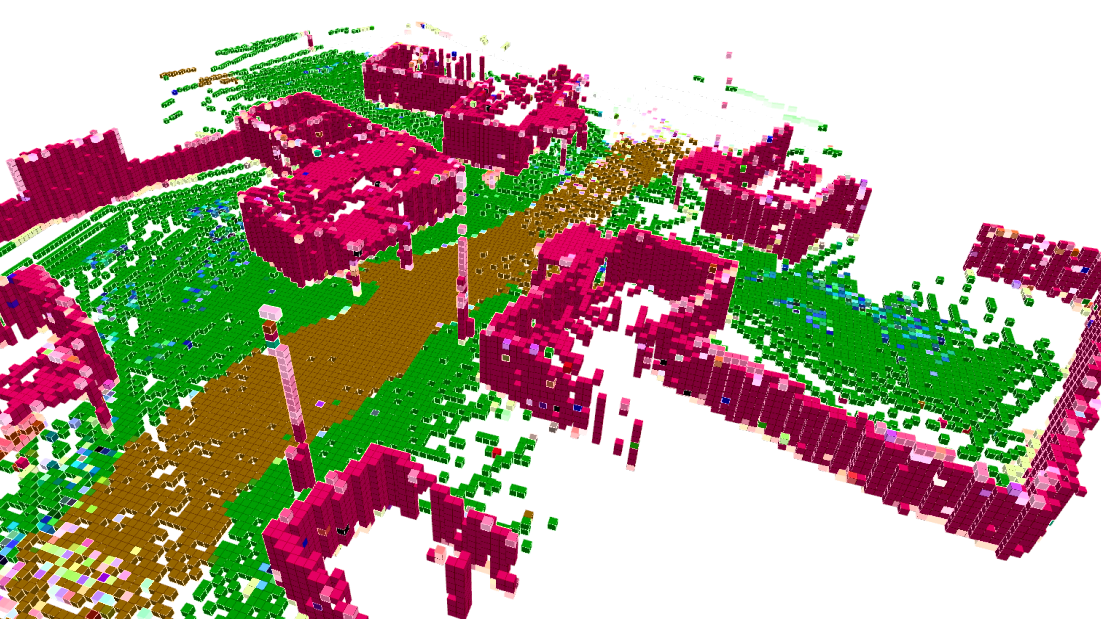}
    \caption{Multi-class occupancy map after $20$ exploration iterations}
    \end{subfigure}%
    \hfill%
    \begin{subfigure}[t]{0.49\linewidth}
    \centering
    \includegraphics[width=\linewidth]{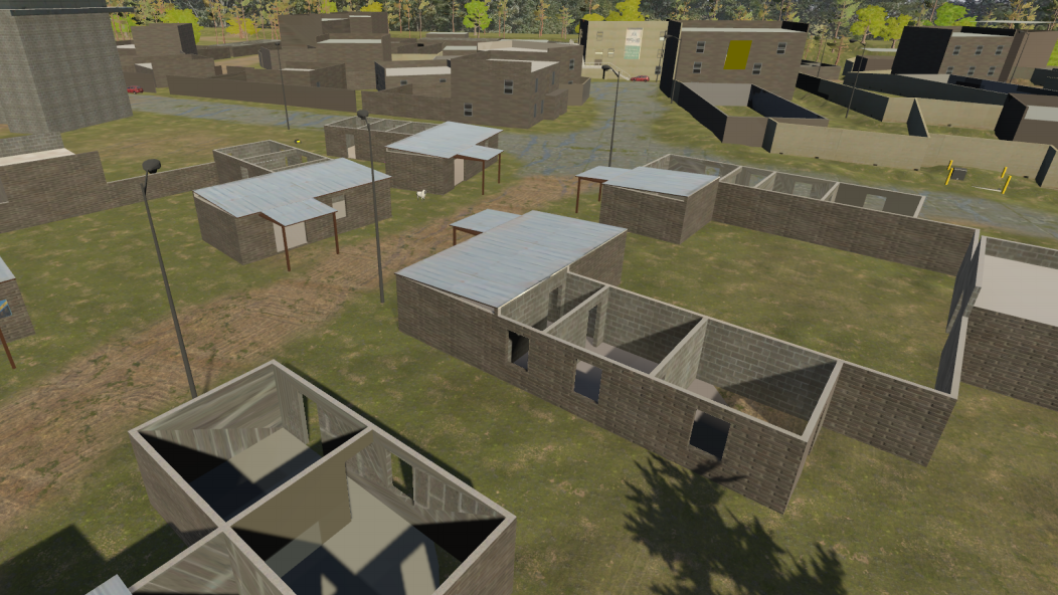}
    \caption{Photo-realistic Unity simulation environment}
    \end{subfigure}
    \caption{Time lapse of autonomous exploration and multi-class mapping in a simulated Unity environment. The robot is equipped with an RGBD sensor and runs semantic segmentation. Different colors represent different semantic categories (grass, dirt, building, etc.).}
    \label{fig:3d_sim_env}
\end{figure*}

\section{Conclusion}
\label{sec:conclusion}

This paper developed techniques for active multi-class mapping using range and semantic segmentation observations. Our results enable efficient mutual information computation over multi-class maps and make it possible to optimize for per-class uncertainty. Our experiments show that the proposed method performs on par with the state of the art FSMI method in binary active mapping scenarios. However, when semantic information is considered our method outperforms existing algorithms and leads to efficient exploration and accurate multi-class mapping. Our future work will develop an octree extension of the probabilistic multi-class grid map and closed-form mutual information computation, allowing our methods to scale to very large environments.


\appendices
\section{Proof of Prop.~\ref{prop:log-odds-bayes-rule}}
\label{app:log-odds-bayes-rule}

Applying Bayes rule in \eqref{eq:bayes_rule} and the factorization in \eqref{eq:pdf_factorization} to $p_t(\bfm)$ for some $\bfz \in \calZ_{t+1}$ leads to:
\begin{equation}
\scaleMathLine[0.89]{\prod_{i=1}^N p(m_i | \calZ_{1:t}, \bfz) = \frac{p(\bfz)}{p(\bfz | \calZ_{1:t})} \prod_{i=1}^N \frac{p(m_i | \bfz)}{p(m_i)} p(m_i | \calZ_{1:t}).}
\end{equation}
The term $\frac{p(\bfz)}{p(\bfz | \calZ_{1:t})}$ may be eliminated by considering the odds ratio of the cell probabilities:
\begin{equation}
\begin{aligned}
\prod_{i=1}^N &\frac{ p(m_i = k_i | \calZ_{1:t}, \bfz) }{ p(m_i = 0 | \calZ_{1:t}, \bfz) } \\
&= \prod_{i=1}^N \frac{p(m_i = k_i | \bfz)}{p(m_i = 0 | \bfz)} \frac{p(m_i = 0)}{p(m_i = k_i)} \frac{p(m_i = k_i | \calZ_{1:t})}{p(m_i = 0| \calZ_{1:t})}.
\end{aligned}
\end{equation}
Since each term in both the left- and right-hand side products only depends on one map cell $m_i$, the expression holds for each individual cell. Re-writing the expression for cell $m_i$ in vector form, with elements corresponding to each possible value of $k_i \in \calK$, and taking an element-wise log leads to:
\begin{equation}
\label{eq:13}
\begin{aligned}
\begin{bmatrix} \log \frac{p(m_i = 0 | \calZ_{1:t},\bfz)}{p(m_i = 0| \calZ_{1:t},\bfz)} & \cdots & \log \frac{p(m_i = K| \calZ_{1:t},\bfz)}{p(m_i = 0| \calZ_{1:t},\bfz)} \end{bmatrix}^\top\\
\qquad = (\bfl_i(\bfz) - \bfh_{0,i}) + \bfh_{t,i}
\end{aligned}
\end{equation}
Applying \eqref{eq:13} recursively for each element $\bfz \in \calZ_{t+1}$ leads to the desired result in \eqref{eq:log-odds-bayes-rule}.\qed

\section{Proof of Prop.~\ref{prop:mut_inf_semantic}}
\label{app:mut-inf-semantic}

Let $\calR_{t+1:t+T}(r_{max}) := \cup_{\tau,b} \calR_{\tau,b}(r_{max})$ be the set of map indices which can potentially be observed by $\underline{\calZ}_{t+1:t+T}$. Using the factorization in \eqref{eq:pdf_factorization} and the fact that Shannon entropy is additive for mutually independent random variables, the mutual information only depends on the cells whose index belongs to $\calR_{t+1:t+T}(r_{max})$, i.e.:
\begin{align}
    I(\bfm&; \underline{\calZ}_{t+1:t+T}  \mid \calZ_{1:t})\notag\\
    &= \sum_{\tau = t+1}^{t+T} \sum_{b = 1}^B \sum_{i \in \calR_{\tau,b}(r_{max})} I(m_i; \bfz_{\tau,b} \mid \calZ_{1:t}). \label{eq:mut_inf_decomp}
\end{align}
This is true because the measurements $\bfz_{\tau,b} \in \underline{\calZ}_{t+1:t+T}$ are independent by construction and the terms $I(m_i; \underline{\calZ}_{t+1:t+T} \mid \calZ_{1:t})$ can be decomposed into sums of mutual information terms between single-beam measurements $\bfz_{\tau,b}$ and the respective observed map cells $m_i$. The mutual information between a single map cell $m_i$ and a sensor ray $\bfz$ is equal to:
\begin{align}
  &I(m_i; \bfz \mid \calZ_{1:t}) = \label{eq:mut_inf_single_cell}\\
  &\scaleMathLine{\int p(\bfz \mid \calZ_{1:t}) \sum_{k=0}^K p(m_i = k \mid \bfz, \calZ_{1:t}) \log{\frac{p(m_i = k \mid \bfz, \calZ_{1:t})}{p_t(m_i = k)}} d\bfz.} \notag
\end{align}
Using the inverse observation model in \eqref{eq:log_inverse_observation_model} and the Bayesian multi-class update in \eqref{eq:log-odds-bayes-rule}, we have:
\begin{align}
    &\sum_{k=0}^K p(m_i = k \mid \bfz, \calZ_{1:t}) \log{\frac{p(m_i = k \mid \bfz, \calZ_{1:t})}{p_t(m_i = k)}} \notag\\ 
    &\scaleMathLine{ = (\bfl_i(\bfz) - \bfh_{0,i})^\top \sigma(\bfl_i(\bfz) - \bfh_{0,i} + \bfh_{t,i} ) +\log{\frac{p(m_i = 0 \mid \bfz, \calZ_{1:t})}{p_t(m_i = 0)}}}\notag\\
    & = f(\bfl_i(\bfz) - \bfh_{0,i}, \bfh_{t,i}), \label{eq:f_func}
\end{align}
where \eqref{eq:log_inverse_observation_model} and \eqref{eq:log-odds-bayes-rule} were applied a second time to the log term above. Plugging \eqref{eq:f_func} back into the mutual information expression in \eqref{eq:mut_inf_single_cell} and returning to \eqref{eq:mut_inf_decomp}, we have:
\begin{align}
\label{eq:mi-integral}
I(&\bfm; \underline{\calZ}_{t+1:t+T}  \mid \calZ_{1:t})\\
&= \sum_{\tau = t+1}^{t+T} \sum_{b = 1}^B \sum_{y = 1}^K \int_0^{r_{max}} \biggl( p(\bfz_{\tau,b} = (r,y) \mid \calZ_{1:t}) \notag\\
&\qquad\qquad\qquad \sum_{i \in \calR_{\tau,b}(r_{max})} \negquad f(\bfl_i((r,y)) - \bfh_{0,i}, \bfh_{t,i}) \biggr) dr. \notag
\end{align}
For $\bfz_{\tau,b}=(r,y)$, the second term inside the integral above can be simplified to:
\begin{equation}
\label{eq:C_cal}
\begin{aligned}
\Tilde{C}_{\tau,b}(r,y) &:= \negquad \sum_{i \in \calR_{\tau,b}(r_{max})} \negquad f(\bfl_i((r,y)) - \bfh_{0,i}, \bfh_{t,i})\\
&\phantom{:}= f(\bfphi^+ + \bfE_{y+1}\bfpsi^+ - \bfh_{0,i_{\tau,b}^*}, \bfh_{t,i_{\tau,b}^*})\\
&\qquad + \negquad\sum_{i \in \calR_{\tau,b}(r) \setminus \{i_{\tau,b}^*\}} \negquad f(\bfphi^- - \bfh_{0,i}, \bfh_{t,i})
\end{aligned}
\end{equation}
because for map indices $i \in \calR_{\tau,b}(r_{max}) \setminus \calR_{\tau,b}(r)$ that are not observed by $\bfz_{\tau,b}$, we have $\bfl_i((r,y)) = \bfh_{0,i}$ according to \eqref{eq:log_inverse_observation_model} and $f(\bfh_{0,i} - \bfh_{0,i}, \bfh_{t,i}) = 0$.

Next, we apply the approximation of \eqref{eq:cond_prob_approx} for the first term in the integral in \eqref{eq:mi-integral}; which leads to integration over $\tilde{p}_{\tau,b}(r,y) \tilde{C}_{\tau,b}(r,y)$ in \eqref{eq:mi-integral}. Note that $\Tilde{p}_{\tau,b}(r,y)$ and $\Tilde{C}_{\tau,b}(r,y)$ are piece-wise constant functions since $\calR_{\tau,b}(r)$ is constant with respect to $r$ as long as the beam $\bfz$ lands in cell $m_{i^*}$. Hence, we can partition the integration domain over $r$ into a union of intervals where the beam $\bfz$ hits the same cell, i.e. $\calR_{\tau,b}(r)$ remains constant:
\begin{equation*}
\int_{0}^{r_{max}} \negquad\Tilde{p}_{\tau,b}(r,y) \Tilde{C}_{\tau,b}(r,y)\,dr = \sum_{n=1}^{N_{\tau,b}} \int_{r_{n-1}}^{r_n} \negquad\Tilde{p}_{\tau,b}(r,y) \Tilde{C}_{\tau,b}(r,y)\,dr,
\end{equation*}
where $N_{\tau,b} = |\calR_{\tau,b}(r_{max})|$, $r_0 = 0$, and $r_N = r_{max}$. From the piece-wise constant property of $\Tilde{p}_{\tau,b}(r,y)$ and $\Tilde{C}_{\tau,b}(r,y)$ over the interval $(r_{n-1},r_n]$, one can easily obtain:
\begin{equation}
\begin{split}
    &\int_{r_{n-1}}^{r_n} \Tilde{p}_{\tau,b}(r,y) \Tilde{C}_{\tau,b}(r,y)\,dr \\ 
    &= \Tilde{p}_{\tau,b}(r_n,y) \Tilde{C}_{\tau,b}(r_n,y) \gamma(n) = p_{\tau,b}(n,y) C_{\tau,b}(n,y),
\end{split}
\end{equation}
where $p_{\tau,b}(n,y)$ and $C_{\tau,b}(n,y)$ are defined in the statement of Prop.~\ref{prop:mut_inf_semantic}. Therefore, substituting $y$ with $k$ and plugging the integration result into \eqref{eq:mi-integral} yields the lower bound for the mutual information between map $\bfm$ and observations $\calZ_{t+1:t+T}$ as in \eqref{eq:mut_inf_semantic}.\qed

\balance
{\small
\bibliographystyle{cls/IEEEtran}
\bibliography{main.bbl}
}

\end{document}